\newcommand\blfootnote[1]{%
  \begingroup
  \renewcommand\thefootnote{}\footnote{#1}%
  \addtocounter{footnote}{-1}%
  \endgroup
}
\DeclarePairedDelimiter\ceil{\lceil}{\rceil}
\newtheorem{thm}{Theorem}[]
\newtheorem{assump}[thm]{Assumption}
\ifcvprfinal\pagestyle{empty}\fi
\begin{document}

%%%%%%%%% TITLE
%\title{Kernel Correction for Single Image Super-Resolution: \\ Robustifying off-the-shelf Deep Super-Resolvers}
\title{Correction Filter for Single Image Super-Resolution: \\ Robustifying Off-the-Shelf Deep Super-Resolvers}
%\title{Robustifying Off-the-Shelf Deep Super-Resolvers via a Correction Filter Approach}

% \author{First Author\\
% Institution1\\
% Institution1 address\\
% {\tt\small firstauthor@i1.org}
% \and
% Second Author\\
% Institution2\\
% First line of institution2 address\\
% {\tt\small secondauthor@i2.org}
% }

\author{Shady Abu Hussein\\
Tel Aviv University, Israel\\
%Institution1 address\\
{\tt\small shadya@mail.tau.ac.il}
\and
Tom Tirer\\
Tel Aviv University, Israel\\
%First line of institution2 address\\
{\tt\small tomtirer@mail.tau.ac.il}
\and
Raja Giryes\\
Tel Aviv University, Israel\\
%First line of institution2 address\\
{\tt\small raja@tauex.tau.ac.il}
}
\maketitle
%\thispagestyle{empty}

%%%%%%%%% ABSTRACT
\begin{abstract}
   The single image super-resolution task is one of the most examined inverse problems in the past decade. %Given a low-resolution image, reconstruction of a high-resolution image that is required. 
   In the recent years, Deep Neural Networks (DNNs) have shown superior performance over alternative  methods when the acquisition process uses a fixed known downscaling kernel---typically a bicubic kernel.
   However, several recent works have shown that in practical scenarios, where the test data mismatch the training data (e.g. when the downscaling kernel is not the bicubic kernel or is not available at training), the leading DNN methods suffer from a huge performance drop.
   Inspired by the literature on generalized sampling, in this work we propose a method for improving the performance of DNNs that have been trained with a fixed kernel on observations acquired by other kernels.
   For a known %arbitrary 
   kernel, we design a closed-form correction filter that modifies the low-resolution image to match one which is obtained by another kernel (e.g. bicubic), and thus improves the results of existing pre-trained DNNs. For an unknown kernel, we extend this idea and propose an algorithm for blind estimation of the required correction filter.  
   We show that our approach outperforms other super-resolution methods, which are designed for %arbitrary 
   general 
   downscaling kernels.

%   Deep Neural Networks (DNN) have shown superior performance in solving Super Resolution (SR) problems, however they work at their best when trained on data that represent well the real test data, which usually is not the case. Most approaches tend to take high resolution images and down-sample them using a fixed downsampling kernel (typically a bicubic kernel), in order to generate training pairs of high and low resolution images. Such approaches result in good performance when the kernel used in training matches the kernel in the test phase, however when it does not, the results suffer from a significant performance drop. In this work we propose an approach that exploits a well known sampling theory in order to improve the performance of a given neural network that has been trained with a fixed kernel, for observations acquired by other kernels. We show how this algorithm can improve the performance of well known deep SR methods on kernels that differ from the training one, and how it achieves better results than state-of-the-art SR methods designed for arbitrary downsampling kernel.
\end{abstract}

\blfootnote{Code is available at https://github.com/shadyabh/Correction-Filter}

\begin{figure*}[t]
\captionsetup[subfigure]{labelformat=empty}
    \centering
    
    \begin{subfigure}[b]{0.27\linewidth}
        \centering
        \includegraphics[trim={100pt 20pt 100pt 120pt}, clip, width=130pt]{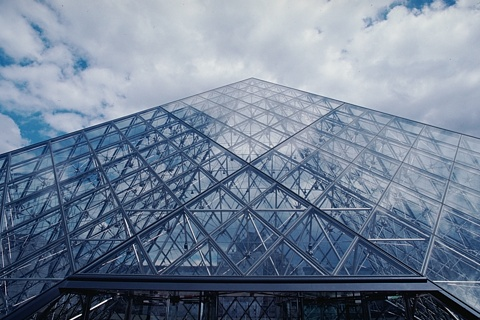}
        \caption{Original image (cropped)}
    %\vspace{1mm}           
    \end{subfigure}
     \begin{subfigure}[b]{0.27\linewidth}
        \centering
        \includegraphics[trim={100pt 20pt 100pt 120pt}, clip, width=130pt]{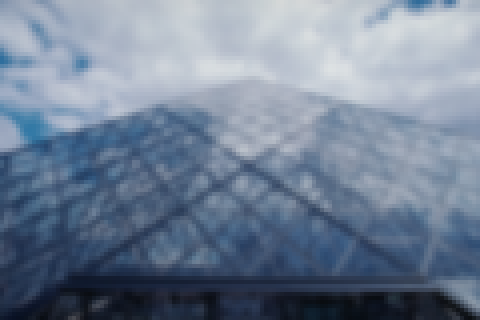}
        \caption{LR image}
    %\vspace{1mm}           
    \end{subfigure}\\
    \begin{subfigure}[b]{0.27\linewidth}
        \centering
        \includegraphics[trim={100pt 20pt 100pt 120pt}, clip, width=130pt]{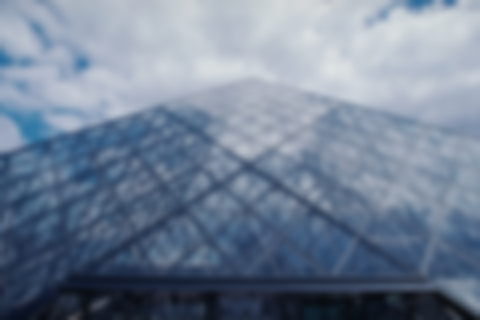}
        \caption{Bicubic upsampling}
    %\vspace{1mm}        
    \end{subfigure}
    \begin{subfigure}[b]{0.27\linewidth}
        \centering
        \includegraphics[trim={100pt 20pt 100pt 120pt}, clip, width=130pt]{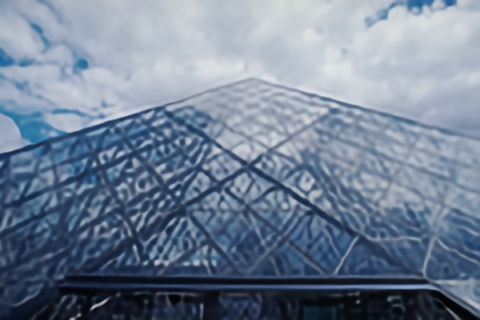}
        \caption{SRMD \cite{SRMD}}
    %\vspace{1mm}          
    \end{subfigure}
    \begin{subfigure}[b]{0.27\linewidth}
        \centering
        \includegraphics[trim={100pt 20pt 100pt 120pt}, clip, width=130pt]{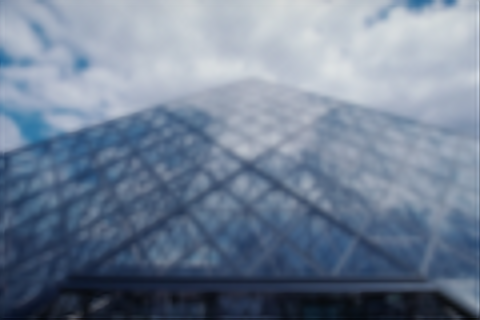}
        \caption{ZSSR \cite{ZSSR}}
    %\vspace{1mm}         
    \end{subfigure}\\
    \begin{subfigure}[b]{0.27\linewidth}
        \centering
        \includegraphics[trim={100pt 20pt 100pt 120pt}, clip, width=130pt]{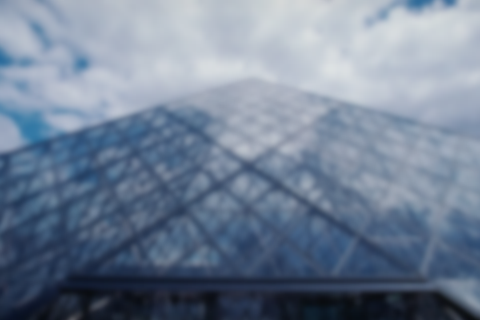}
        \caption{ProSR \cite{proSR}}
    %\vspace{1mm}         
    \end{subfigure}
    \begin{subfigure}[b]{0.27\linewidth}
        \centering
        \includegraphics[trim={100pt 20pt 100pt 120pt}, clip, width=130pt]{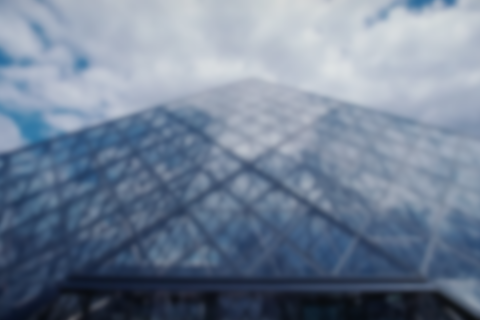}
        \caption{RCAN \cite{RCAN}}
    %\vspace{1mm}         
    \end{subfigure}
    \begin{subfigure}[b]{0.27\linewidth}
        \centering
        \includegraphics[trim={100pt 20pt 100pt 120pt}, clip, width=130pt]{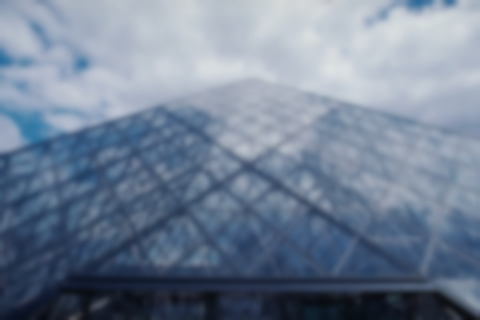}
        \caption{DBPN \cite{DBPN}}
    %\vspace{1mm}        
    \end{subfigure}
    \begin{subfigure}[b]{0.27\linewidth}
        \centering
        \includegraphics[trim={100pt 20pt 100pt 120pt}, clip, width=130pt]{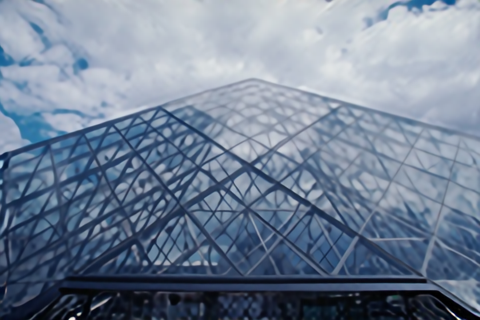}
        \caption{ProSR with our correction}
    %\vspace{1mm}          
    \end{subfigure}
    \begin{subfigure}[b]{0.27\linewidth}
        \centering
        \includegraphics[trim={100pt 20pt 100pt 120pt}, clip, width=130pt]{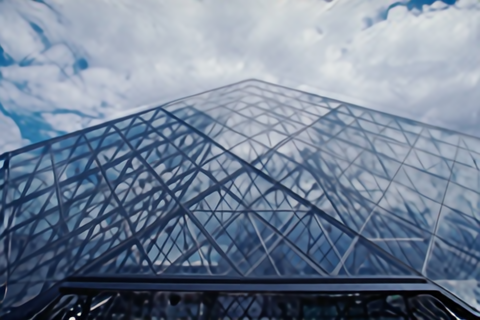}
        \caption{RCAN with our correction}
    %\vspace{1mm}           
    \end{subfigure}
    \begin{subfigure}[b]{0.27\linewidth}
        \centering
        \includegraphics[trim={100pt 20pt 100pt 120pt}, clip, width=130pt]{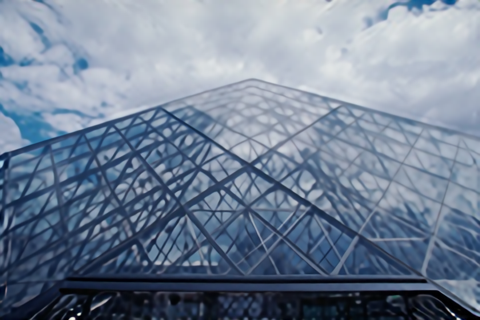}
        \caption{DBPN with our correction}
    %\vspace{1mm}           
    \end{subfigure}
    \caption{Non-blind super-resolution of image 223061 from BSD100, for scale factor 4 and Gaussian downscaling kernel with std $4.5/\sqrt{2}$. %From left to right and top to bottom: original image, cropped zoom of the original image, bicubic upsampling, SRMD \cite{SRMD}, ZSSR \cite{ZSSR}, proSR \cite{proSR}, RCAN \cite{RCAN}, DBPN \cite{DBPN}, proSR with correction, RCAN with correction, DBPN with correction. 
    %Note that our 
    Our correction filter significantly improves the performance of DNNs trained on another SR kernel.}
    \label{fig:comparison_gauss_4.5}
\end{figure*}

%%%%%%%%% BODY TEXT
\section{Introduction}
\label{sec:Intro}

The task of Single Image Super-Resolution (SISR) is one of the most examined inverse problems in the past decade \cite{freeman2002example, glasner2009super, yang2010image, dong2012nonlocally}. 
In this problem, the goal is to reconstruct a latent high-resolution (HR) image from its low-resolution (LR) version obtained by an acquisition process that includes low-pass filtering and sub-sampling.
In the recent years, along with the developments in deep learning, many SISR methods that are based on Deep Neural Networks (DNNs) have been proposed \cite{dong2014learning, kim2016accurate, lim2017enhanced, ledig2017photo, RCAN, proSR, DBPN}.

Typically, the performance of SISR approaches is evaluated on test sets with a fixed known acquisition process, e.g. a bicubic downscaling kernel.
This evaluation methodology allows to prepare large training data, which are based on ground truth HR images and their LR counterparts synthetically obtained through the known observation model.
DNNs, which have been exhaustively trained on such training data, clearly outperform other alternative algorithms, e.g. methods that are based on hand-crafted prior models such as sparsity or non-local similarity \cite{dong2012nonlocally,glasner2009super,yang2010image}.
%Moreover, the results obtained by DNNs on such benchmarks seem to gradually improve along with the developments in deep learning [XX,XX,XX].

Recently, several works have shown that in practical scenarios where the test data mismatch the training data, the leading DNN methods suffer from a huge performance drop \cite{zhang2017learning, ZSSR, tirer2019super}. 
Such scenarios include a downscaling kernel which is not the bicubic kernel and is not available at the training phase. A primary example is an unknown kernel that needs to be estimated from the LR image at test time.

Several recent SISR approaches have proposed different strategies for enjoying the advantages of deep learning while mitigating the restriction of DNNs to the fixed kernel assumption made in the training phase.
% One strategy is to train a CNN super-resolver that gets as inputs both the LR image and the degradation model and assumes that the downscaling kernels belong to a certain set of Gaussian filters \cite{SRMD,IKC}.
% Another approach builds on the structural prior of CNNs, which promotes signals with spatially recurring patterns (e.g.~natural images) and thus allows to %skip the offline training phase on external data \cite{ulyanov2018deep,ZSSR}. 
% train a super-resolver CNN from scratch at test time \cite{ulyanov2018deep,ZSSR}. 
% %One such work is ZSSR [XX] that trains a super-resolver CNN at test time (i.e. it completely skips an offline training phase) on examples synthesized from the LR image using an input kernel.
% Another line of work recovers the latent HR image by minimization of a cost function, composed of fidelity and prior terms, where only the prior term is handled by a pre-trained deep denoiser or GAN  \cite{zhang2017learning,tirer2018image,bora2017compressed,DPSR}. 
% Moreover, the two last approaches have been incorporated recently \cite{tirer2019super,shady2019image}.
% In all these methods the downscaling kernel is given as an input. Therefore, in a "blind" setting---where the kernel is unknown, it is still possible to apply these methods after an initial kernel estimation phase \cite{sroubek2007unified,kernelGAN}.
These strategies include: modifying the training phase such that it covers a predefined set of downscaling kernels \cite{SRMD,IKC}; using DNNs to capture only a natural-image prior which is decoupled from the SISR task \cite{zhang2017learning,bora2017compressed}; or completely avoid any offline training and instead train a CNN super-resolver from scratch at test time \cite{ulyanov2018deep,ZSSR}. 

\textbf{Contribution.} 
In this work we take a different strategy, inspired by the generalized sampling literature \cite{eldar2015sampling,unser2000sampling}, for handling LR images obtained by arbitrary downscaling kernels. Instead of completely ignoring the state-of-the-art DNNs that have been trained for the bicubic model, as done by other prior works, we propose a method that transforms the LR image to match one which is obtained by the bicubic kernel.
The modified LR can then be inserted into existing leading super-resolvers, such as DBPN \cite{DBPN}, RCAN \cite{RCAN}, and proSR \cite{proSR}, thus, improving their performance significantly on kernels they have not been trained on.
The proposed transformation is performed using a correction filter, which has a closed-form expression when the true (non-bicubic) kernel is given.

In the "blind" setting, where the kernel is unknown, we extend our approach and propose an algorithm that estimates the required correction. % filter using a multilayer linear CNN. 
The proposed approach outperforms other super-resolution methods %which are designed for arbitrary downscaling kernels 
in various practical scenarios. See example results in Figure~\ref{fig:comparison_gauss_4.5} %in the case of a known kernel.
for the non-blind setting.

\begin{figure*}[t]
    \centering\includegraphics[width=450pt]{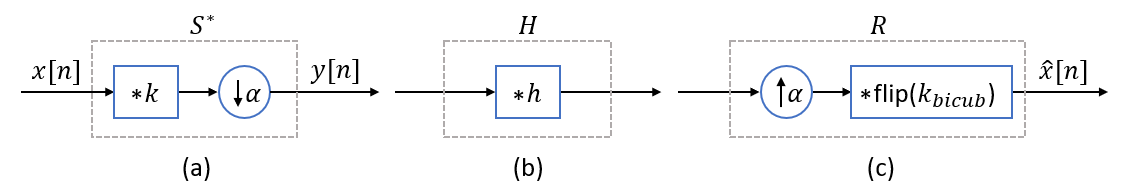}
  \caption{Downsampling, correction, and upsampling operators for single image super-resolution: (a) Downsampling operator composed of convolution with a kernel $\mathbf{k}$ and sub-sampling by factor of $\alpha$; (b) Correction operator composed of convolution with a correction filter $\mathbf{h}$; (c) Upsampling operator composed of up-sampling by factor of $\alpha$ and convolution with a (flipped) kernel $\mathbf{k}_{bicub}$. In our approach, $\mathcal{H}$ is computed for $\mathcal{S}^*$ and $\mathcal{R}$, but then we replace $\mathcal{R}$ with a pre-trained DNN super-resolver.}
  %the reconstruction operator is replaced with a pre-trained DNN super-resolver.}
\label{fig:block_diag}
\end{figure*}

\section{Related Work}
\label{sec:related}

In the past five years many works have employed DNNs for the SISR task, showing a great advance in performance with respect to the reconstruction error \cite{dong2014learning, DBPN, kim2016accurate, lim2017enhanced, proSR, RCAN} and the perceptual quality \cite{Blau2018Perception,ledig2017photo, sajjadi2017enhancenet, wang2018esrgan}. 
However, one main disadvantage of DNN-based SISR methods is their sensitivity to the LR image formation model. A network performance tends to drop significantly if it has been trained for one acquisition model and then been tested on another \cite{ZSSR, tirer2019super,zhang2017learning}.

Recently, different SISR strategies has been proposed with the goal of enjoying the advantages of deep learning while mitigating the restriction of DNNs to the fixed kernel assumption made in the training phase.
One approach trains a CNN super-resolver that gets as inputs both the LR image and the degradation model, and assumes that the downscaling kernels belong to a certain set of Gaussian filters \cite{IKC,SRMD}.
Another approach builds on the structural prior of CNNs, which promotes signals with spatially recurring patterns (e.g.~natural images) and thus allows to %skip the offline training phase on external data \cite{ulyanov2018deep,ZSSR}. 
train a super-resolver CNN from scratch at test time \cite{ZSSR,ulyanov2018deep}. 
%One such work is ZSSR [XX] that trains a super-resolver CNN at test time (i.e. it completely skips an offline training phase) on examples synthesized from the LR image using an input kernel.
Another line of work recovers the latent HR image by minimizing a cost function %, composed of fidelity and prior terms, where only the prior term 
that is composed of a fidelity term (e.g.~least squares or back-projection \cite{tirer2019back}) and a prior term, where only the latter 
is handled by a pre-trained CNN denoiser or GAN  \cite{zhang2017learning,tirer2018image,bora2017compressed,DPSR}. 
Recently, the two last approaches have been incorporated by applying image-adaptation to denoisers \cite{tirer2019super} and GANs \cite{shady2019image}.
In all these methods the downscaling kernel is given as an input. In a blind setting (where the kernel is unknown) it is still possible to apply these methods after an initial kernel estimation phase.

Our approach is inspired by the literature on generalized sampling \cite{Butzer1983Survey,unser2000sampling,eldar2015sampling}, which generalizes the classical Whittaker–Nyquist–Kotelnikov–Shannon sampling theorem \cite{whittaker1915xviii, nyquist1928certain, kotelnikov1933transmission,shannon1949communication}, which considers signals that are band-limited in the frequency domain and sinc interpolations.
The generalized theory provides a framework and conditions under which a signal that is sampled by a certain basis can be reconstructed by a different basis. In this framework, the sampled signal is reconstructed using a linear operator that can be decoupled into two phases, the first applies a digital correction filter and the second includes processing with a reconstruction kernel. % \textcolor{red}{(perhaps add figure like 5.2 in Yonina's book, p.153)}.
The role of the correction filter is to transform the sampling coefficients, associated with the sampling kernel, to coefficients which fit the reconstruction kernel.
%A key ingredient of the reconstruction is applying a digital correction filter. 

Several %early 
works have used the %generalized sampling framework 
correction filter approach for image processing \cite{eldar2009beyond,glasbey2001optimal,ramani2006non}. 
These works typically propose linear interpolation methods, i.e.~the correction filter is followed by a linear reconstruction operation, and do not use a strong natural-image prior. As a result, the recovery of fine details is lacking.   
%Other approaches tried to capture image statistics and use  

%More recent image processing works have %started 
%been 
%using non-linear reconstruction operations, such as sparsity based methods \cite{yang2010image, zeyde2010single, dong2012nonlocally, egiazarian2015single}, %which may have reduced the appearance of the correction filter ingredient in the literature.
%in which a correction filter ingredient is not common.

In this work, we plan to use (very) non-linear reconstruction methods, namely---DNNs, %. However, recall that training DNNs 
whose training 
is difficult, computationally expensive, storage demanding, and cannot be done when the observation model is not known in advance. To tackle these difficulties, we revive the correction filter approach and show how it can be used with deep super-resolvers which have been already trained.
%we adopt this strategy to the super-resolution problem.

The required correction filter depends on the kernel which is used for sampling. Therefore, in the blind setting, it needs to be estimated from the LR image. %Our proposed correction filter estimation algorithm %uses multilayer  linear CNN. The choice of deep linear neural network is inspired by the downscaling kernel estimation in \cite{kernelGAN} and by the theoretical study in \cite{arora2018convergence}.
To this end, we propose an iterative optimization algorithm for estimating the correction filter. 
In general, only a few works have considered the blind SISR setting and developed kernel estimation methods \cite{sroubek2007unified,michaeli2013nonparametric,IKC,kernelGAN}. 

Finally, we would like to highlight major differences between this paper and the work in \cite{IKC}, whose "kernel correction" approach may be misunderstood as our "correction filter".
In \cite{IKC}, three different DNNs (super-resolver, kernel estimator, and kernel corrector) are offline trained under the assumption that the downscaling kernel belongs to a certain family of Gaussian filters (similarly to \cite{SRMD}), and the CNN super-resolver gets the estimated kernel as an input.
So, the first major difference is that contrary to our approach, no pre-trained existing DNN methods (other than SRMD \cite{SRMD}) can be used in \cite{IKC}.
Secondly, their approach is restricted by the offline training assumptions to very certain type of downscaling kernels, contrary to our approach. %For example, in the blind setting, our approach trains only a simple linear CNN in test-time, therefore it is not restricted by offline training assumptions.
Thirdly, the concepts of these works are very different: The (iterative) correction in \cite{IKC} modifies the estimated downscaling kernel, while our correction filter modifies the LR image.

% As mentioned in section \ref{Intro}, when it comes to feed-forward networks, the results on non-ideal downsampling kernels are quite poor. Therefore different approaches have been examined to overcome this issue \cite{ZSSR, blindSRwithIterativeKerCorr, DPSR}. In \cite{ZSSR} they use small CNN that is trained at test time using single image scaled to multiple levels, in order to learn to reconstruct the missing data by exploiting internal image patch repetition. 

\section{The Proposed Method}
\label{Sec:method}

The single image super resolution (SISR) acquisition model, can be formulated as
\begin{equation}
    \label{Eq_downsample_model}
    \mathbf{y} = (\mathbf{x} \ast \mathbf{k})\downarrow _\alpha, % +  \ \mathbf{e},
\end{equation}
where $\mathbf{x} \in \mathbb{R}^n$ represents the latent HR image, $\mathbf{y} \in \mathbb{R}^m$ represents the observed LR image, $\mathbf{k} \in \mathbb{R}^d$ $(d \ll n)$ is the (anti-aliasing) blur kernel, %$\mathbf{e} \in \mathbb{R}^m$ is a vector of noise, 
$\ast$ denotes the linear convolution operator, and $\downarrow _\alpha$ denotes sub-sampling operator with stride of $\alpha$.
Under the common fashion of dropping the edges of $\mathbf{x} \ast \mathbf{k}$, such that it is in $\mathbb{R}^n$, we have that $m=\ceil*{n/\alpha}$.

Note that Equation \eqref{Eq_downsample_model} can be written in a more elegant way as
\begin{equation}
    \label{Eq_downsample_model2}
    \mathbf{y} = \mathcal{S}^* \mathbf{x}, % +  \ \mathbf{e},
\end{equation}
where $\mathcal{S}^*:\mathbb{R}^n \rightarrow \mathbb{R}^{m}$ is a linear operator that encapsulates the entire downsampling operation, i.e. $\mathcal{S}^*$ is a composition of blurring followed by sub-sampling.
The {\em downsampling operator} $\mathcal{S}^*$ is presented in Figure~\ref{fig:block_diag}(a).

Most SISR deep learning methods, e.g. \cite{dong2014learning, kim2016accurate, lim2017enhanced, ledig2017photo, RCAN, proSR, DBPN}, assume that the observations are obtained using the bicubic kernel. Let us denote by $\mathcal{R}^*$ the associated downsampling operator (essentially, $\mathcal{R}^*$ coincides with the previously defined $\mathcal{S}^*$ if $\mathbf{k}$ is the bicubic kernel $\mathbf{k}_{bicub}$). 
The core idea of our approach is to modify the observations $\mathbf{y} = \mathcal{S}^* \mathbf{x}$,  obtained for an arbitrary downscaling kernel $\mathbf{k}$, such that they mimic the (unknown) "ideal observations" $\mathbf{y}_{bicub} = \mathcal{R}^* \mathbf{x}$, which can be fed into pre-trained DNN models.

In what follows, we present a method to (approximately) achieve this goal using the correction filter tool, adopted from the generalized sampling literature. 
First, we consider the non-blind setting, where the downscaling kernel is known, and thus $\mathcal{S}^*$ is known. In this case, we obtain a closed-form expression for the required correction filter, which depends on $\mathbf{k}$ (and on $\mathbf{k}_{bicub}$).
Later, we extend the approach to the blind setting, where $\mathbf{k}$ is unknown. In this case, we propose a technique for estimating the correction filter from the LR image $\mathbf{y}$.

\subsection{The non-blind setting}
\label{Sec:method_nonblind}

In the non-blind setting, both the downscaling kernel $\mathbf{k}$ and the target kernel $\mathbf{k}_{bicub}$ are known. Therefore, the downsampling operators $\mathcal{S}^*$ and $\mathcal{R}^*$ are known as well.
Using common notations from generalized sampling literature \cite{eldar2015sampling}, let us denote by $\mathcal{S}$ and $\mathcal{R}$ the adjoint operators of $\mathcal{S}^*$ and $\mathcal{R}^*$, respectively.
The operator $\mathcal{R}:\mathbb{R}^m \rightarrow \mathbb{R}^n$ %can be interpreted as a reconstruction operator, 
is an {\em upsampling operator} 
that restores a signal in $\mathbb{R}^n$ from $m$ samples, associated with the downsampling operator $\mathcal{R}^*$. % obtained by $\mathcal{R}:\mathbb{R}^n \rightarrow \mathbb{R}^m$.
In the context of our work, when $\mathcal{R}$ is applied on a vector it pads it with $n-m$ zeros ($\alpha-1$ zeros between each two entries) and convolves it with a flipped version of $\mathbf{k}_{bicub}$.
The upsampling operator $\mathcal{R}$ is presented in Figure~\ref{fig:block_diag}(c).
A similar definition holds for $\mathcal{S}:\mathbb{R}^m \rightarrow \mathbb{R}^n$ with the kernel $\mathbf{k}$.

% A key assumption in generalized sampling is that the signal model and sampling system (i.e.~sampling  %operator and its associated 
% and reconstruction operators) allow for perfect recovery. Therefore, to proceed, let us make the following assumption. % despite the fact that it does not hold for natural images.

The key goal of generalized sampling theory is to identify signal models and sampling systems %(i.e.~sampling and reconstruction operators) 
that allow for perfect recovery. Therefore, to proceed, let us make the following assumption.

\begin{assump}
\label{assump:xRRx}
The signal $\mathbf{x}$ can be perfectly recovered from its samples $\mathcal{R}^*\mathbf{x}$ by the operator $\mathcal{R} (\mathcal{R}^*\mathcal{R})^{-1}$, i.e. 
\begin{equation}
    \mathbf{x} = \mathcal{R} (\mathcal{R}^*\mathcal{R})^{-1} \mathcal{R}^* \mathbf{x}.
\end{equation}
\end{assump}

Assumption~\ref{assump:xRRx} essentially states that the latent image $\mathbf{x}$ resides in the linear subspace spanned by the bicubic kernel. Therefore, it can be perfectly recovered from the observations $\mathbf{y}_{bicub} = \mathcal{R}^* \mathbf{x}$ by applying the pseudoinverse of $\mathcal{R}^*$ on $\mathbf{y}_{bicub}$, i.e.~by the estimator $\hat{\mathbf{x}}=\mathcal{R}(\mathcal{R}^*\mathcal{R})^{-1}\mathbf{y}_{bicub}$. 
Even though Assumption~\ref{assump:xRRx} does not hold for natural images, 
it is motivated by the fact that %downsampling with a bicubic kernel produces a natural looking LR image (typically better than the LR images obtained using other kernels), and we have 
there are 
many DNN methods that can handle observations of the form $\mathcal{R}^* \mathbf{x}$ quite well.

%Suppose that we have been given the observations $\mathbf{y}_{bicub}=\mathcal{R}^* \mathbf{x}$ and that Assumption~\ref{assump:xRRx} holds, then the estimator $\hat{\mathbf{x}}=\mathcal{R}(\mathcal{R}^*\mathcal{R})^{-1}\mathbf{y}_{bicub}$ would yield perfect reconstruction.
However, since we are given observations that are obtained by a different downscaling kernel, $\mathbf{y}=\mathcal{S}^* \mathbf{x}$, let us propose a different estimator $\hat{\mathbf{x}}=\mathcal{R}\mathcal{H}\mathbf{y}$, where $\mathcal{H}:\mathbb{R}^m \rightarrow \mathbb{R}^m$ is a {\em correction operator}.
This recovery procedure is presented in Figures~\ref{fig:block_diag}(b)+\ref{fig:block_diag}(c).
The following theorem presents a condition and a formula for $\mathcal{H}$ under which perfect recovery is possible under Assumption~\ref{assump:xRRx}.

\begin{thm}
\label{thm:correctFilt}
Let $\mathbf{y}=\mathcal{S}^* \mathbf{x}$, $\hat{\mathbf{x}}=\mathcal{R}\mathcal{H}\mathbf{y}$, and assume that Assumption~\ref{assump:xRRx} holds. Then, if
%\textcolor{red}{
\begin{equation}
\label{Eq_emptyNull}
    \mathrm{null}(\mathcal{S}^*) \cap \mathrm{range}(\mathcal{R}) = \{0\}, 
\end{equation}
%}
we have that $\hat{\mathbf{x}}=\mathbf{x}$ for
\begin{equation}
\label{Eq_corrFilt}
    \mathcal{H} = \left ( \mathcal{S}^* \mathcal{R} \right )^{-1} : \mathbb{R}^m \rightarrow \mathbb{R}^m.
\end{equation}
\end{thm}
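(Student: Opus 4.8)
The plan is to reduce the statement to two observations: that $\mathcal{S}^*\mathcal{R}$ is genuinely invertible under condition \eqref{Eq_emptyNull}, so that the operator $\mathcal{H}$ of \eqref{Eq_corrFilt} is well defined, and that Assumption~\ref{assump:xRRx} lets us represent $\mathbf{x}$ itself as an element of $\mathrm{range}(\mathcal{R})$, after which the correction simply inverts the linear map that produced $\mathbf{y}$ from that representation. Everything then collapses to a one-line cancellation.

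First I would establish invertibility of $\mathcal{S}^*\mathcal{R}:\mathbb{R}^m\to\mathbb{R}^m$. Its domain and codomain have the same finite dimension, so it suffices to prove injectivity. If $\mathcal{S}^*\mathcal{R}\mathbf{z}=0$, then $\mathcal{R}\mathbf{z}$ lies in $\mathrm{null}(\mathcal{S}^*)$ and trivially in $\mathrm{range}(\mathcal{R})$, so \eqref{Eq_emptyNull} forces $\mathcal{R}\mathbf{z}=0$. It remains to note that $\mathcal{R}$ is itself injective: Assumption~\ref{assump:xRRx} presupposes that $\mathcal{R}^*\mathcal{R}$ is invertible, and $\mathcal{R}\mathbf{z}=0$ implies $\mathcal{R}^*\mathcal{R}\mathbf{z}=0$, hence $\mathbf{z}=0$. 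Therefore $\mathcal{S}^*\mathcal{R}$ is injective, hence invertible, and $\mathcal{H}=(\mathcal{S}^*\mathcal{R})^{-1}$ makes sense.

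Next I would invoke Assumption~\ref{assump:xRRx} to write $\mathbf{x}=\mathcal{R}\mathbf{c}$ with $\mathbf{c}:=(\mathcal{R}^*\mathcal{R})^{-1}\mathcal{R}^*\mathbf{x}\in\mathbb{R}^m$; in particular $\mathbf{x}\in\mathrm{range}(\mathcal{R})$. Substituting into the acquisition model gives $\mathbf{y}=\mathcal{S}^*\mathbf{x}=\mathcal{S}^*\mathcal{R}\mathbf{c}$, so applying the correction operator yields $\mathcal{H}\mathbf{y}=(\mathcal{S}^*\mathcal{R})^{-1}(\mathcal{S}^*\mathcal{R})\mathbf{c}=\mathbf{c}$, and finally $\hat{\mathbf{x}}=\mathcal{R}\mathcal{H}\mathbf{y}=\mathcal{R}\mathbf{c}=\mathbf{x}$, which is the claim.

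Since the computation itself is immediate once the pieces are assembled, the step I expect to require the most care is the invertibility argument for $\mathcal{S}^*\mathcal{R}$: one must be explicit that equality of the (finite) dimensions of domain and codomain upgrades injectivity to invertibility, and that injectivity of $\mathcal{R}$ — needed to pass from $\mathcal{R}\mathbf{z}=0$ to $\mathbf{z}=0$ — is not an extra hypothesis but is already forced by the well-posedness of Assumption~\ref{assump:xRRx}. It is also worth remarking that \eqref{Eq_emptyNull} is precisely the condition making $\mathrm{range}(\mathcal{R})$ a valid reconstruction space for the sampling operator $\mathcal{S}^*$ in the generalized-sampling sense, so nothing beyond finite-dimensional linear algebra enters the proof.
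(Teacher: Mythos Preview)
Your proof is correct and follows essentially the same route as the paper's: both invoke Assumption~\ref{assump:xRRx} to place $\mathbf{x}$ in $\mathrm{range}(\mathcal{R})$ and then cancel $(\mathcal{S}^*\mathcal{R})^{-1}(\mathcal{S}^*\mathcal{R})$. Your treatment is in fact more careful than the paper's, which simply asserts that \eqref{Eq_emptyNull} implies invertibility of $\mathcal{S}^*\mathcal{R}$ without spelling out the finite-dimensional injectivity argument or the role of $\mathcal{R}^*\mathcal{R}$ being invertible.
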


\begin{proof}

Note that 
\begin{align}
\hat{\mathbf{x}}&=\mathcal{R}\mathcal{H}\mathbf{y} \nonumber \\
&=\mathcal{R}\mathcal{H}\mathcal{S}^* \mathbf{x} \nonumber \\
&=\mathcal{R}\mathcal{H}\mathcal{S}^* \mathcal{R}(\mathcal{R}^*\mathcal{R})^{-1} \mathcal{R}^* \mathbf{x},
\end{align}
where the last equality follows from Assumption~\ref{assump:xRRx}. Next, \eqref{Eq_emptyNull} implies that the operator $(\mathcal{S}^* \mathcal{R})$ is invertible. Thus, setting $\mathcal{H}$ according to \eqref{Eq_corrFilt} is possible, and we get
\begin{align}
\hat{\mathbf{x}}&=\mathcal{R} \left ( \mathcal{S}^* \mathcal{R} \right )^{-1} \mathcal{S}^* \mathcal{R} (\mathcal{R}^*\mathcal{R})^{-1} \mathcal{R}^* \mathbf{x} \nonumber \\
&=\mathcal{R} (\mathcal{R}^*\mathcal{R})^{-1} \mathcal{R}^* \mathbf{x} = \mathbf{x},
\end{align}
where the last equality follows from Assumption~\ref{assump:xRRx}.

\end{proof}

Theorem~\ref{thm:correctFilt} is presented in operator notations to simplify the derivation.
In the context of SISR (i.e.~with the previous definitions of $\mathcal{S}^*$ and $\mathcal{R}$), %applying the operator $\mathcal{H}$ on $\mathbf{y}$ is translated 
the operator $\mathcal{H} = \left ( \mathcal{S}^* \mathcal{R} \right )^{-1}$ 
can be applied simply as a convolution with a correction filter $\mathbf{h}_0$, %whose Discrete Fourier Transform (DFT) is 
given by%\footnote{Equation \eqref{Eq_corrFilt2} is presented for one-dimensional signals, but the extension to two-dimensional signals is straightforward.}
\begin{equation}
\label{Eq_corrFilt3}
    \mathbf{h}_0 = \mathrm{IDFT} \left \{ \frac{1}{ \mathrm{DFT}\left \{ (\mathbf{k}\ast \mathrm{flip}(\mathbf{k}_{bicub})) \downarrow_{\alpha} \right\} } \right \},
\end{equation}
where $\mathrm{DFT}(\cdot)$ and $\mathrm{IDFT}(\cdot)$ denote the Discrete Fourier Transform and its inverse respectively.\footnote{Using DFT allows fast  implementation of cyclic convolutions. When it is used for linear convolutions, edge artifacts need to be ignored.} 
%
% \textcolor{blue}{%Although it is not needed, we use a regularization parameter in our tests in order to improve numerical stability.
% %
% To ensure numerical stability, we slightly modify \eqref{Eq_corrFilt2}, and compute $\mathbf{h}$ using
% \begin{equation}
% \label{Eq_corrFilt2}
%     \mathbf{h} = \mathrm{IDFT} \left \{ \frac{\mathrm{DFT}\left \{ (\mathbf{k}\ast \mathbf{k}_{bicub}) \downarrow_{\alpha} \right\}^*}{ \left| \mathrm{DFT}\left \{ (\mathbf{k}\ast \mathbf{k}_{bicub}) \downarrow_{\alpha} \right\}\right|^2 + \epsilon} \right \},
% \end{equation}
% where $\epsilon$ is a small regularization parameter (in all of our experiments we set it to  $10^{-4}$).
% }  %\textcolor{red}{can someone please insert the formula with sum? Yet, I think that if we also assume that the convolution is circular and we sample exactly at indices $1,1+\alpha,1+2\alpha,...$ (and not interpolate between them), then there is an elegant way to write the denominator as $(\mathbf{k} \ast \mathrm{flip}(\mathbf{k}_{bicub}))\downarrow _\alpha$}.

% \textcolor{blue}{
% \begin{equation}
% \label{Eq_corrFilt2}
%     \mathbf{h} = \mathrm{IDFT} \left \{ \frac{\mathrm{DFT}\left \{ (\mathbf{k}\ast \mathbf{k}_{bicub}) \downarrow_{\alpha} \right\}^*}{ \left| \mathrm{DFT}\left \{ (\mathbf{k}\ast \mathbf{k}_{bicub}) \downarrow_{\alpha} \right\}\right|^2 + \epsilon} \right \},
% \end{equation}
% where $\mathrm{DFT}(\cdot)$, $\mathrm{IDFT}(\cdot)$ denote the Discrete Fourier Transform and its inverse respectively, and $\epsilon$ is a regularization parameter.
% }

\begin{table*}
\footnotesize
\renewcommand{\arraystretch}{1.3}
%\caption{Super resolution comparison on Set14 with accurate kernel.} 
\caption{Non-blind super-resolution comparison on Set14. Each cell displays PSNR [dB] (left) and SSIM (right).}
\label{table:super_res}
\centering
    \begin{tabular}{ | l || l | l | l | l |}
    \hline
         & Scale & Gaussian std = $1.5/\sqrt{2}$ & Gaussian std = $2.5/\sqrt{2}$ & Box of width = 4 \\ \hline 
    ZSSR & 2 & 28.107 / 0.829 & 27.954 / 0.806 & 28.506 / 0.802 \\ \hline
    SRMD & 2 & 32.493 / 0.878 & 29.923 / 0.812 & 25.944 / 0.757 \\ \hline 
    DBPN &   2	 & 30.078 / 0.850& 26.366 / 0.734& 28.444 / 0.803\\ 
    DBPN + our correction & 2 & 34.023 / \textbf{0.904} & \textbf{33.288} / \textbf{0.895} & 29.364 / 0.822 \\ \hline 
    ProSR & 2 & 30.073 / 0.849 & 26.371 / 0.734 & 28.459 / 0.803 \\ 
    ProSR + our correction & 2 & 33.954 / 0.903 & 33.273 / \textbf{0.895} & \textbf{29.514} / \textbf{0.825} \\ \hline 
    RCAN & 2 & 30.118 / 0.851 & 26.389 / 0.736 & 28.469 / 0.804 \\ 
    RCAN + our correction & 2 & \textbf{34.043} / \textbf{0.904} & 33.251 / \textbf{0.895} & 29.306 / 0.820 \\ \hline 
    %IDBP & 2 & 33.283	0.896 /	0.068 & 30.313 / 0.828 / 0.150 & - \\ \hline
    %DPSR & 2 & 12.016 / 0.195 / 0.511 & 11.827 / 0.246 / 0.401 & 10.993 / 0.118 / 0.407 \\ \hline

    \end{tabular}
    \begin{tabular}{ | l || l | l | l | l |}
    \hline 
         & Scale & Gaussian std = $3.5/\sqrt{2}$ & Gaussian std = $4.5/\sqrt{2}$ & Box of width = 8 \\ \hline 
    ZSSR & 4 & 25.642 / 0.701 & 25.361 / 0.683 & 24.549 / 0.653 \\ \hline
    SRMD & 4 & 26.877 / 0.718 & 25.350 / 0.674 & 19.704 / 0.525 \\ \hline  
    DBPN & 4	 & 25.067 / 0.685 & 23.890 / 0.645 & 24.636 / 0.667 \\ 
    DBPN + our correction & 4 & \textbf{28.680} / \textbf{0.775} & \textbf{28.267} / \textbf{0.766} & 25.157 / 0.679 \\ \hline 
    ProSR & 4 & 25.033 / 0.683 & 23.882 / 0.645 & 24.685 / 0.667 \\ 
    ProSR + our correction & 4 & 28.609 / 0.772 & 28.220 / 0.764 & \textbf{25.419} / \textbf{0.683} \\ \hline 
    RCAN & 4 & 25.077 / 0.685 & 23.904 / 0.646 & 24.694 / 0.668 \\ 
    RCAN + our correction & 4 & 28.534 / 0.771 & 28.110 / 0.762 & 25.301 / 0.679\\ \hline 
    %IDBP & 4 & 26.732 / 0.718 / 0.226 & 26.594 / 0.713 / 0.222 & - \\ \hline	
    %DPSR & 4 & 9.437 / 0.238 / 0.495 & 9.246 / 0.253 / 0.527 & 8.435 / 0.167 /0.493 \\ \hline

    \end{tabular}
\end{table*}

In practice, instead of using the weak estimator $\hat{\mathbf{x}}=\mathcal{R}\mathcal{H}\mathbf{y}$ that does not use any natural-image prior, we propose to recover the HR image by
\begin{equation}
\label{Eq_our_SR}
    \hat{\mathbf{x}}=f(\mathbf{h} \ast \mathbf{y}),
\end{equation}
where $f(\cdot)$ is a DNN super-resolver that has been trained under the assumption of bicubic kernel %(i.e.~we replace the naive reconstruction operator in Figure~\ref{fig:block_diag}(c) with a DNN). 
and $\mathbf{h}$ is a modified correction filter, given by
\begin{align}
\label{Eq_corrFilt2p5}
    \mathbf{h} &= \mathrm{IDFT} \left \{ \frac{\mathrm{DFT}\left \{ (\mathbf{k}_{bicub}\ast \mathrm{flip}(\mathbf{k}_{bicub})) \downarrow_{\alpha} \right\}}{ \mathrm{DFT}\left \{ (\mathbf{k}\ast \mathrm{flip}(\mathbf{k}_{bicub})) \downarrow_{\alpha} \right\}} \right \}  \nonumber \\
    & \triangleq \mathrm{IDFT} \left \{ \frac{ F_{numer} }{ F_{denom} } \right \}.
\end{align}
Let us explain the idea behind the estimator in \eqref{Eq_our_SR}. Since the inverse mapping $f(\cdot)$ assumes bicubic downscaling ($\mathcal{R}^* \mathbf{x}$), it can be interpreted as incorporating $\mathcal{R} (\mathcal{R}^*\mathcal{R})^{-1}$ with a (learned) prior. Therefore, unlike $\mathbf{h}_0$ in \eqref{Eq_corrFilt3} which is followed by $\mathcal{R}$, here the correction filter should also compensate for the operation $(\mathcal{R}^*\mathcal{R})^{-1}$ which is implicitly done in $f(\cdot)$. This explains the term in the numerator of $\mathbf{h}$ (compared to 1 in the numerator of $\mathbf{h}_0$).
To ensure numerical stability, we slightly modify \eqref{Eq_corrFilt2p5}, and compute $\mathbf{h}$ using
\begin{equation}
\label{Eq_corrFilt2}
    \mathbf{h} = \mathrm{IDFT} \left \{ F_{numer} \cdot \frac{ F_{denom}^* }{ \left| F_{denom} \right |^2 + \epsilon } \right \},
\end{equation}
where $\epsilon$ is a small regularization parameter. % (in all of our experiments we set it to  \textcolor{red}{$10^{-14}$, not -7?}).
Regarding the choice of $f(\cdot)$, 
%For the experiments in this paper 
in our experiments 
we use DBPN \cite{DBPN}, RCAN \cite{RCAN}, and proSR \cite{proSR}, but in general any other method with state-of-the-art performance (for bicubic kernel) is expected to give good results.

Note that the theoretical motivation for our strategy requires that the condition in \eqref{Eq_emptyNull} holds.
This condition can be inspected by compaing the bandwidth of the kernels $\mathbf{k}$ and $\mathbf{k}_{bicub}$ in the frequency domain. As $\mathbf{k}$ is commonly a low-pass filter (and so is $\mathbf{k}_{bicub}$), %\textcolor{red}{
the condition requires that the passband of $\mathbf{k}_{bicub}$ is contained in the passband of $\mathbf{k}$. %} 
Yet, as shown in the experiments section, %when a small amount of regularization is used in the denominator of \eqref{Eq_corrFilt2}, 
our approach yields a significant improvement even when the passband of $\mathbf{k}$ is moderately smaller than the passband of $\mathbf{k}_{bicub}$.
% In other words, we show that as long as the LR image is not extremely blurry, our approach is useful.
Furthermore, we observe that even for very blurry LR images performance improvement can be obtained by increasing the regularization parameter in \eqref{Eq_corrFilt2}.
We refer the reader to the supplementary material for more details.

\begin{table*}
\footnotesize
\renewcommand{\arraystretch}{1.3}
%\caption{Super resolution comparison on BSD100 with accurate kernel.} 
\caption{Non-blind super-resolution comparison on BSD100. Each cell displays PSNR [dB] (left) and SSIM (right).}
\label{table:super_res_BSD100}
\centering
    % \begin{tabular}{ | l || l | l | l | l |}
    % \hline
    %      & Scale & Gaussian std = $1.5/\sqrt{2}$ & Gaussian std = $2.5/\sqrt{2}$ \\ \hline
    %     ZSSR & 2 & 29.339 / 0.822 & 26.415 / 0.715\\ \hline
    %     SRMD & 2 & 26.591 / 0.803 & 29.294 / 0.838\\ \hline         
    %     DBPN &  2 & 29.512 / 0.827 & 26.371 / 0.711 \\ 
    %     DBPN + correction & 2 & 32.300 / 0.884 & 31.875 / \textbf{0.878} \\ \hline
    %     ProSR & 2 & 29.513 / 0.827 & 26.381 / 0.711 \\ 
    %     ProSR + correction & 2 & 32.276 / 0.884 & 31.899 / \textbf{0.878} \\ \hline
    %     RCAN & 2 & 29.558 / 0.829 & 26.397 / 0.713 \\ 
    %     RCAN + correction &  2 & \textbf{32.368} / \textbf{0.886} & \textbf{31.876} / \textbf{0.878} \\ \hline
    %     %IDBP & 2 & 31.694 / 0.874 / 0.084 & 29.153 / 0.795 / 0.165\\ \hline
    % \end{tabular}
    % \begin{tabular}{ | l || l | l | l | l |}
    % \hline
    %       & Scale & Gaussian std = $3.5/\sqrt{2}$ & Gaussian std = $4.5/\sqrt{2}$ \\ \hline
    %     ZSSR & 4 & 25.115 / 0.651 & 24.348 / 0.625 \\ \hline
    %     SRMD & 4 & 25.735 / 0.704 & 26.432 / 0.707 \\ \hline          
    %     DBPN & 4 & 25.268 / 0.662 & 24.357 / 0.628 \\ 
    %     DBPN + correction & 4 & \textbf{27.690} / \textbf{0.740} & \textbf{27.474} / \textbf{0.733} \\ \hline
    %     ProSR & 4 &25.237 / 0.661 & 24.353 / 0.628 \\ 
    %     ProSR + correction & 4 & 27.645 / 0.738  & 27.455/ \textbf{0.733}  \\ \hline 
    %     RCAN & 4 & 25.281 / 0.663  & 24.373 / 0.629  \\ 
    %     RCAN + correction & 4 & 27.626 / 0.739 & 27.399/ 0.732 \\ \hline
    %     %IDBP & 4 & 26.414 / 0.691 / 0.221 & 26.330 / 0.686 / 0.217 \\ \hline
    % \end{tabular}
    
    \begin{tabular}{ | l || l | l | l || l | l | l | l |}
    \hline
         & Scale & Gaussian std = $1.5/\sqrt{2}$ & Gaussian std = $2.5/\sqrt{2}$ & Scale & Gaussian std = $3.5/\sqrt{2}$ & Gaussian std = $4.5/\sqrt{2}$ \\ \hline
        ZSSR & 2 & 29.339 / 0.822 & 26.415 / 0.715 & 4 & 25.115 / 0.651 & 24.348 / 0.625 \\ \hline
        SRMD & 2 & 26.591 / 0.803 & 29.294 / 0.838 & 4 & 25.735 / 0.704 & 26.432 / 0.707 \\ \hline        
        DBPN &  2 & 29.512 / 0.827 & 26.371 / 0.711 & 4 & 25.268 / 0.662 & 24.357 / 0.628 \\ 
        DBPN + correction & 2 & 32.300 / 0.884 & 31.875 / \textbf{0.878} & 4 & \textbf{27.690} / \textbf{0.740} & \textbf{27.474} / \textbf{0.733} \\ \hline
        ProSR & 2 & 29.513 / 0.827 & 26.381 / 0.711 & 4 &25.237 / 0.661 & 24.353 / 0.628 \\ 
        ProSR + correction & 2 & 32.276 / 0.884 & 31.899 / \textbf{0.878} & 4 & 27.645 / 0.738  & 27.455/ \textbf{0.733}  \\ \hline
        RCAN & 2 & 29.558 / 0.829 & 26.397 / 0.713 & 4 & 25.281 / 0.663  & 24.373 / 0.629  \\
        RCAN + correction &  2 & \textbf{32.368} / \textbf{0.886} & \textbf{31.876} / \textbf{0.878} & 4 & 27.626 / 0.739 & 27.399/ 0.732 \\ \hline
        %IDBP & 2 & 31.694 / 0.874 / 0.084 & 29.153 / 0.795 / 0.165\\ \hline
    \end{tabular}

\end{table*}

% %% WRONG:
% Note that the condition is satisfied in common scenarios where the observed LR image is somewhat blurry (recall that downsampling with a bicubic kernel produces a sharp LR image). In such scenarios leading DNN methods that are applied directly on the blurry LR image suffer from a huge performance drop \cite{ZSSR,tirer2019super,DPSR,IKC}. 

% \begin{algorithm}
% \caption{Find correction filter}
% \vspace{2mm}
% \begin{algorithmic}
% \Procedure{$find\_h$}{$\mathbf{k}_{bicub}, \mathbf{k}, \epsilon^2, \alpha$}

% \State $N = length(\mathbf{k}) + length(\mathbf{k}_{bicub}) - 1$
% \State $\mathbf{k} = DFT\{\mathbf{k}/sum(\mathbf{k}), N\}$\Comment{DFT of length $N$}
% \State $\mathbf{R} = DFT\{\mathbf{k}_{bicub}/sum(\mathbf{k}_{bicub}), N\}$\Comment{DFT of length $N$}
% \State $\tilde{\mathbf{Q}} = \mathbf{R}\cdot \mathbf{k}$
% \State $\tilde{\mathbf{q}} = IDFT\{\tilde{\mathbf{Q}}\}$
% \State $ j = length(\tilde{\mathbf{q}}) \mod \alpha$
% \State $\mathbf{q} = \tilde{\mathbf{q}}[j:\alpha:end]$\Comment{sub-sample with stride of $\alpha$}
% \State $\mathbf{Q} = DFT\{\mathbf{q}\}$
% \State $\mathbf{H} = \frac{\mathbf{Q}^*}{(|\mathbf{Q}|^2 + \epsilon^2)}$
% \State $\mathbf{h} = circular\_shift\{IDFT\{\mathbf{H}\}, -1\}$

% \State \textbf{return} $\mathbf{h}/sum(\mathbf{h})$

% \EndProcedure
% \end{algorithmic}

% \label{Alg:find_h}
% \end{algorithm}

\subsection{The blind setting}
\label{Sec:method_blind}

In the blind setting, the downscaling kernel $\mathbf{k}$ is unknown. Therefore, we cannot compute the correction filter $\mathbf{h}$ using \eqref{Eq_corrFilt2}, and extending our approach to this setting  %requires estimating 
requires to estimate $\mathbf{k}$ and $\mathbf{h}$ from the LR image $\mathbf{y}$.

To this end, we propose %an iterative optimization algorithm that minimizes 
to estimate $\mathbf{k}$ as the minimizer of the following objective function 
\begin{equation}
\label{Eq_estk_cost}
    \xi(\mathbf{k}) = \|\mathbf{y} - \mathcal{S}^* f(\mathcal{H}\mathbf{y}) \|_{\mathrm{Hub}} + \| \mathbf{m}_{\mathrm{cen}} \cdot \mathbf{k} \|_1 + \|\mathbf{k}\|_1,
\end{equation}
where $\|\cdot\|_{\mathrm{Hub}}$ is Huber loss \cite{huber1964robust}, the operator $\mathcal{H}$ is filtering with $\mathbf{h}$ given in \eqref{Eq_corrFilt2}, $\mathcal{S^*}$ is the downsampling operator, $f(\cdot)$ is the given SR network, and $\mathbf{m}_{\mathrm{cen}}$ is given by
$$\mathbf{m}_{\mathrm{cen}}(x, y) = 1 - \mathrm{e}^{-\frac{(x^2 + y^2)}{32\alpha^2}},$$ 
where $\alpha$ is the scale factor. Note that the two operators $\mathcal{H}$ and $\mathcal{S^*}$ depend on the kernel $\mathbf{k}$. The last two terms in \eqref{Eq_estk_cost} are regularizers: 
the last term promotes sparsity of $\mathbf{k}$ and the penultimate term centralizes its density. 
% that is represented as 4 layers linear CNN, $f(\cdot)$ is the given SR network and $\mathbf{m_b}$ is given by
% $$\mathbf{m}_b(x, y) = 1 - e^{\frac{-(x^2 + y^2)}{32\alpha^2}}$$ 

\begin{algorithm}
\caption{Correction filter estimation}
\vspace{2mm}
\kwInput{$\mathbf{y}$, $\mathbf{k}_{bicub}$, $\alpha$, $f(\cdot)$.}
\kwOutput{$\hat{\mathbf{h}}$ an estimate for $\mathbf{h}$.}
\kwParameters{$\mathbf{k}^{(0)}=\mathbf{k}_{bicub}$, $i=0$, $\epsilon=10^{-14}$, $\gamma=10^{-4}$, $N_{iter} = 250$} 
\While{$i<N_{iter}$}{
    $i = i+1$\;
    %$\mathbf{h}^{(i)} = find\_h(\mathbf{k}^{(i-1)}, \mathbf{k}_{bicub}, \epsilon^2)$\;
    Compute $\mathbf{h}^{(i)}$ using \eqref{Eq_corrFilt2} (for $\mathbf{k}^{(i-1)}$, $\alpha$ and $\epsilon$)\;
    %$\mathbf{y}_{h}^{(i)} = \mathbf{h}^{(i)}\ast \mathbf{y}$\;
    %$\mathbf{x}_{h}^{(i)} = f(\mathbf{y}_{h}^{(i)}, \mathbf{k}_{bicub}, \alpha)$\;
    %$\hat{\mathbf{y}}^{(i)} = down\_sample(\mathbf{x}_{h}^{(i)}, \mathbf{s^{(i-1)}}, \alpha)$\;
    $\mathbf{x}_{h}^{(i)} = f(\mathbf{h}^{(i)}\ast \mathbf{y})$\;
    $\hat{\mathbf{y}}^{(i)} = (\mathbf{x}_{h}^{(i)} \ast \mathbf{k}^{(i-1)})\downarrow _\alpha$\;
    $\xi(\mathbf{k}^{(i-1)}) = \| \mathbf{y} - \hat{\mathbf{y}}^{(i)} \|_{\mathrm{Hub}} + \| \mathbf{m}_{\mathrm{cen}} \cdot \mathbf{k}^{(i-1)} \|_1 + \| \mathbf{k}^{(i-1)} \|_1$\;
    $\mathbf{k}^{(i)}=$ Adam update (for $\xi(\mathbf{k}^{(i-1)})$ with LR $\gamma$)\;
}
$\hat{\mathbf{h}} = \mathbf{h}^{(i)}$\;
\label{Alg:h_est}
\end{algorithm}

Inspired by \cite{arora2018convergence, kernelGAN}, we choose to parameterize the latent $\mathbf{k}$ by a {\em linear} CNN composed of 4 layers, i.e. $\mathbf{k} = \mathbf{k}_0\ast \mathbf{k}_1\ast \mathbf{k}_2 \ast \mathbf{k}_3$, where $\left\{\mathbf{k}_n\right\}_{n=0}^{2}$ are of size $33 \times 33$ and $\mathbf{k}_3$ is of size $32 \times 32$. 
The minimization of \eqref{Eq_estk_cost} with respect to $\mathbf{k}$ is performed by 250 iterations of Adam \cite{Adam} with learning rate of $10^{-4}$, initialized with $\mathbf{k}^{(0)} = \mathbf{k}_{bicub}$. 
The proposed procedure is described in Algorithm~\ref{Alg:h_est}.
%A more detailed description of the proposed procedure is given in the supplementary material.
Note that at each iteration we obtain estimates for both the downscaling kernel $\mathbf{k}$ and correction filter $\mathbf{h}$. 
The final estimator of $\mathbf{h}$ is then used in \eqref{Eq_our_SR} to reconstruct the HR image, similarly to the non-blind setting.

% In other words, the downsampling kernel $\mathbf{k}$ is initialized to $\mathbf{k}_{bicub}$, then $\mathbf{h}$ is derived from $\mathbf{k}$ as in Algorithm~\ref{Alg:find_h}. We then apply $\mathbf{h}$ on the observations image $\mathbf{y}$ to obtain $\mathbf{y}_h$ that is upsampled using $f(\cdot)$ to get $\mathbf{x}_h$ which we want it to converge to $\mathbf{x}$. The upsampled image $\mathbf{x}_h$ is then downsampled using sequence of 4 learned filters that represent $\mathbf{k}$ ($\mathbf{k} = \mathbf{k_0}\ast \mathbf{k_1}\ast \mathbf{k_2} \ast \mathbf{k_3}$), we denote the result donwsampled image by $\hat{\mathbf{y}}$, then we require that $\hat{\mathbf{y}}$ is as close to ${\mathbf{y}}$ as possible.
% A detailed description of the algorithm is given in Algorithm~\ref{Alg:find_h} and Algorithm~\ref{Alg:h_est}.
% The minimization is performed using Adam \cite{Adam} with learning rate of $10^{-3}$, for 250 iterations.

% After an estimator of $\mathbf{h}$ is obtained using Algorithm~\ref{Alg:h_est}, the HR image can be reconstructed by \eqref{Eq_our_SR}, similarly to the non-blind setting.

\section{Experiments}
\label{sec:exp}

In this section we examine the performance and improvement due to our correction filter approach in the non-blind and blind settings, 
using three different off-the-shelf DNN super-resolvers that serve as $f(\cdot)$ in \eqref{Eq_our_SR}: DBPN \cite{DBPN}, RCAN \cite{RCAN}, and proSR \cite{proSR}.
We compare our approach to other methods that receive the downscaling kernel $\mathbf{k}$ (or its estimation in the blind setting) as an input: ZSSR \cite{ZSSR} and SRMD \cite{SRMD}.
We also compare our method to DPSR \cite{DPSR}, however, since its results are extremely inferior to the other strategies (e.g. about 10 dB lower PSNR) they are deferred to the supplementary material.
All the experiments are performed with the official code of each method.  %Unfortunately, such code has not been available for \cite{IKC}.
We refer the reader to the supplementary material for more results.

\begin{figure}
\captionsetup[subfigure]{labelformat=empty}
    \centering
    \begin{subfigure}[b]{0.50\linewidth}
        \centering
        \includegraphics[trim={25pt 5pt 25pt 30pt}, clip, width=75pt]{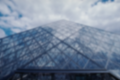}
        \caption{Observed LR (Gaussian kernel)}
    %\vspace{1mm}           
    \end{subfigure}\\
     \begin{subfigure}[b]{0.45\linewidth}
        \centering
        \includegraphics[trim={25pt 5pt 25pt 30pt}, clip, width=75pt]{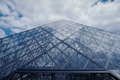}
        \caption{"Ideal" LR (bicubic kernel)}
    %\vspace{1mm}           
    \end{subfigure}
    \begin{subfigure}[b]{0.45\linewidth}
        \centering
        \includegraphics[trim={25pt 5pt 25pt 30pt}, clip, width=75pt]{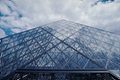}
        \caption{Corrected observed LR}
    %\vspace{1mm}        
    \end{subfigure}
    \caption{Comparison between observed, "ideal" and corrected LR images, for SR scale factor of 4 and Gaussian kernel of std $4.5/\sqrt{2}$. The SR result appears in Figure~\ref{fig:comparison_gauss_4.5}}
    \label{fig:correctedVSBicubic}
\end{figure}

\subsection{The non-blind setting}
\label{sec:exp_nonblind}

In this section, we assume that the downscaling kernel $\mathbf{k}$ is known.
Therefore, the correction filter $\mathbf{h}$ can be computed directly using \eqref{Eq_corrFilt2}.
We examine scenarios with scale factors of 2 and 4.
For scale factor of 2, we use Gaussian kernels with standard deviation $\sigma=1.5/\sqrt{2}$ and $\sigma=2.5/\sqrt{2}$, and box kernel of size $4 \times 4$.
For scale factor of 4, we use Gaussian kernels with standard deviation $\sigma=3.5/\sqrt{2}$ and $\sigma=4.5/\sqrt{2}$, and box kernel of size $8 \times 8$.

\begin{table}
\footnotesize
\renewcommand{\arraystretch}{1.3}
\caption{Comparison of the filter-corrected (non-bicubic) LR to the bicubic LR on Set14. Each cell displays PSNR [dB] (left) and SSIM (right).}
\label{table:corrected_compared_to_bicubic}
\centering
    \begin{tabular}{| l | l | l | l |}
    \hline
         Scale & Gauss. std $1.5/\sqrt{2}$ & Gauss. std $2.5/\sqrt{2}$ & Box, width 4 \\ \hline 
       	2 & 51.345 / 0.999 & 45.456 / 0.995 & 33.679 / 0.941 \\ \hline \hline
     Scale & Gauss. std $3.5/\sqrt{2}$ & Gauss. std $4.5/\sqrt{2}$ & Box, width 8 \\ \hline 
     4 & 58.437 / 0.999 & 46.917 / 0.995 & 32.308 / 0.907 \\ \hline
    \end{tabular}
\end{table}

\begin{figure*}
\captionsetup[subfigure]{labelformat=empty}
    \centering
    \begin{subfigure}[b]{0.30\linewidth}
        \centering
        \includegraphics[trim={60pt 120pt 60pt 200pt}, clip, width=107pt]{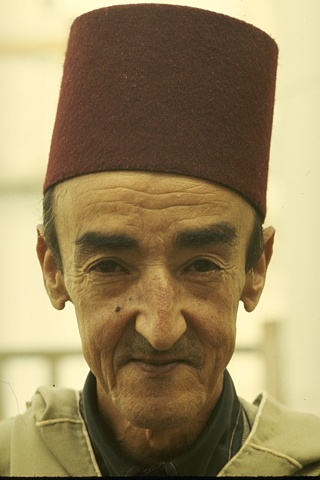}
        \caption{Original image (cropped)}
    %\vspace{1mm}         
    \end{subfigure}
    \begin{subfigure}[b]{0.30\linewidth}
        \centering
        \includegraphics[trim={60pt 120pt 60pt 200pt}, clip, width=107pt]{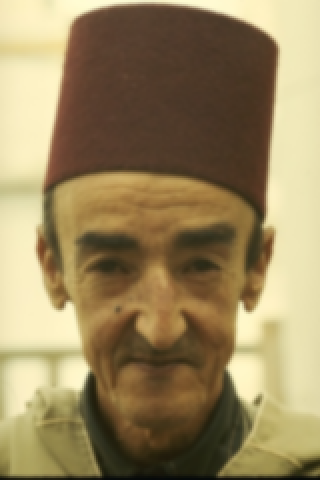}
        \caption{LR image}
    %\vspace{1mm}         
    \end{subfigure}\\
    \begin{subfigure}[b]{0.26\linewidth}
        \centering
        \includegraphics[trim={60pt 120pt 60pt 200pt}, clip, width=107pt]{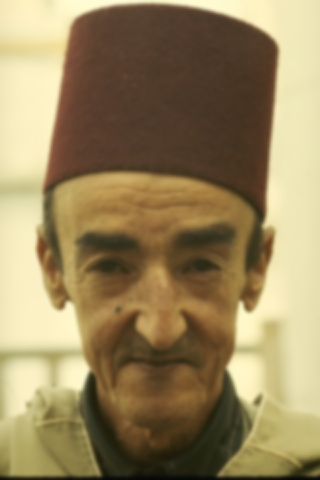}
        \caption{Bicubic upsampling}
    %\vspace{1mm}         
    \end{subfigure}
    \begin{subfigure}[b]{0.26\linewidth}
        \centering
        \includegraphics[trim={60pt 120pt 60pt 200pt}, clip, width=107pt]{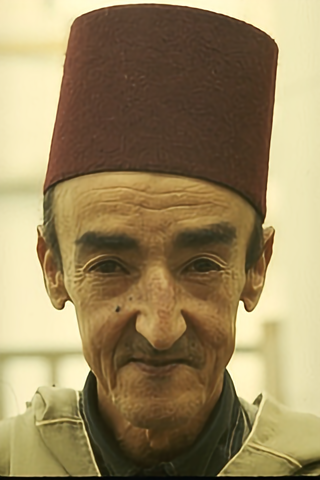}
        \caption{SRMD}
    %\vspace{1mm}         
    \end{subfigure}
    \begin{subfigure}[b]{0.26\linewidth}
        \centering
        \includegraphics[trim={60pt 120pt 60pt 200pt}, clip, width=107pt]{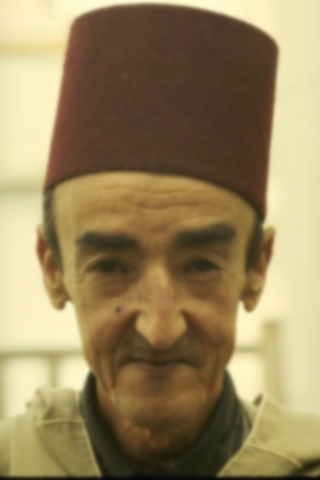}
        \caption{ZSSR}
    %\vspace{1mm}         
    \end{subfigure}\\
    \begin{subfigure}[b]{0.26\linewidth}
        \centering
        \includegraphics[trim={60pt 120pt 60pt 200pt}, clip, width=107pt]{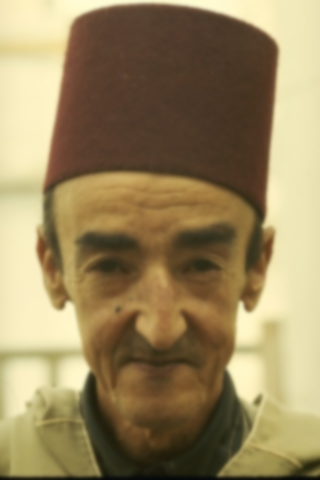}
        \caption{ProSR}
    %\vspace{1mm}         
    \end{subfigure}
    \begin{subfigure}[b]{0.26\linewidth}
        \centering
        \includegraphics[trim={60pt 120pt 60pt 200pt}, clip, width=107pt]{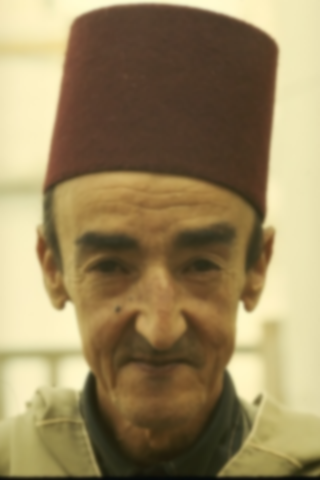}
        \caption{RCAN}
    %\vspace{1mm}         
    \end{subfigure}
    \begin{subfigure}[b]{0.26\linewidth}
        \centering
        \includegraphics[trim={60pt 120pt 60pt 200pt}, clip, width=107pt]{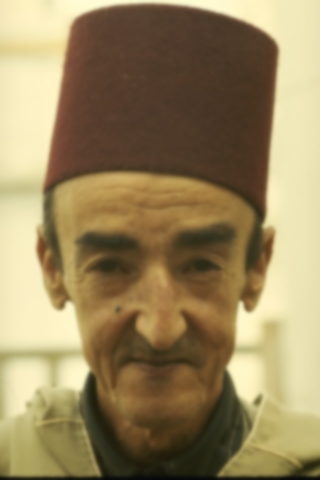}
        \caption{DBPN}
    %\vspace{1mm}         
    \end{subfigure} \\ 
    \begin{subfigure}[b]{0.26\linewidth}
        \centering
        \includegraphics[trim={60pt 120pt 60pt 200pt}, clip, width=107pt]{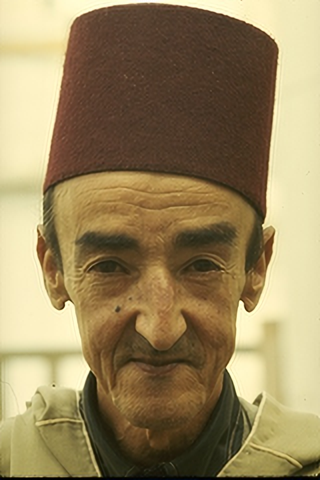}
        \caption{ProSR with our correction}
    %\vspace{1mm}         
    \end{subfigure}
    \begin{subfigure}[b]{0.26\linewidth}
        \centering
        \includegraphics[trim={60pt 120pt 60pt 200pt}, clip, width=107pt]{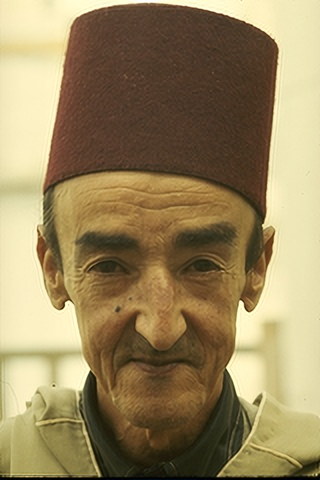}
        \caption{RCAN with our correction}
    %\vspace{1mm}         
    \end{subfigure}
    \begin{subfigure}[b]{0.26\linewidth}
        \centering
        \includegraphics[trim={60pt 120pt 60pt 200pt}, clip, width=107pt]{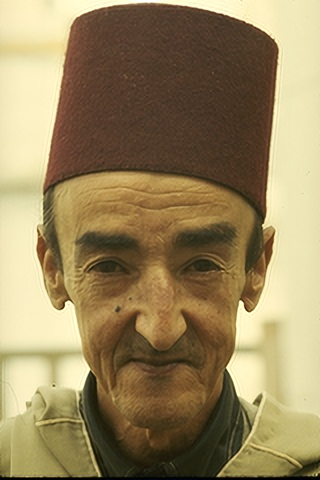}
        \caption{DBPN with our correction}
    %\vspace{1mm}         
    \end{subfigure}
    \caption{Non-blind super-resolution of image 189080 from BSD100, for scale factor of 2 and Gaussian downscaling kernel with standard deviation 2.5.} % From left to right and top to bottom: original image, cropped zoom of the original image, bicubic upsampling, SRMD \cite{SRMD}, ZSSR \cite{ZSSR}, proSR \cite{proSR}, RCAN \cite{RCAN}, DBPN \cite{DBPN}, proSR with correction, RCAN with correction, DBPN with correction.}
    \label{fig:comparison_gauss_2.5}
\end{figure*}

The results are presented in Tables~\ref{table:super_res} and \ref{table:super_res_BSD100} for the test-sets Set14 and BSD100, respectively.
Figures~\ref{fig:comparison_gauss_4.5} and \ref{fig:comparison_gauss_2.5} %and \ref{fig:comparison_box_4}
present several visual results.
It can be seen that the proposed filter correction approach significantly improves the results of DBPN, RCAN, and proSR, which have been trained for the (incorrect) bicubic kernel.
Moreoever, note that the filter-corrected applications of DBPN, RCAN, and proSR, also outperform SRMD and ZSSR, while the plain applications of DBPN, RCAN, and proSR are inferior to SRMD.

As explained in Section~\ref{Sec:method}, the proposed approach is based on mimicking the (unknown) "ideal" LR image $\mathbf{y}_{bicub} = \mathcal{R}^* \mathbf{x}$ by the corrected LR image $\mathcal{H}\mathbf{y}$. The high PSNR and SSIM results between such pairs of images, which are presented in Table~\ref{table:corrected_compared_to_bicubic} for Set14, verify that this indeed happens. 
Figure~\ref{fig:correctedVSBicubic} shows a visual comparison.

\textbf{Inference run-time.} 
Computing the correction filter requires a negligible amount of time, so it does not change the  run-time of an off-the-shelf DNN.
Using NVIDIA RTX 2080ti GPU, the per image run-time of all the methods except ZSSR is smaller than 1 second (because no training is done in the test phase), while ZSSR requires approximately 2 minutes per image.

\begin{table*}
\footnotesize
\renewcommand{\arraystretch}{1.3}
%\caption{Super resolution comparison on Set14 with estimated kernel.} 
\caption{Blind super-resolution comparison on Set14. Each cell displays PSNR [dB] (left) and SSIM (right).}
\label{table:super_res_est}
\centering
    \begin{tabular}{ | l || l | l | l | l | l |}
    \hline
         & Scale & Gaussian std = $1.5/\sqrt{2}$ & Gaussian std = $2.5/\sqrt{2}$ & Box of width = 4 \\ \hline 
    KernelGAN & 2 & 26.381 / 0.785 & \textbf{28.868} / \textbf{0.807} & 28.221 / 0.802 \\ \hline 
    DBPN &   2	 & \textbf{30.078} / 0.85 & 26.366 / 0.734 & 28.444 / 0.803 \\ 
    DBPN + our estimated correction & 2 & 28.46 / 0.842 & 28.037 / 0.794 & \textbf{29.778} / \textbf{0.840} \\ \hline
    % DBPN + est (new) & 2 & 30.047 / \textbf{0.865} & \textbf{30.524} / \textbf{0.858} & \textbf{28.474} / \textbf{0.819} \\ \hline

    \end{tabular}
    \begin{tabular}{ | l || l | l | l | l | l |}
    \hline
         & Scale & Gaussian std = $3.5/\sqrt{2}$ & Gaussian std = $4.5/\sqrt{2}$ & Box of width = 8 \\ \hline
    KernelGAN & 4 & 24.424 / 0.673 & 25.174 / 0.669  & 23.575 / 0.634 \\ \hline 
    DBPN & 4 & 25.067 / 0.685 & 23.890 / 0.645 & 24.636 / 0.667 \\ 
    DBPN + our estimated correction & 4 & \textbf{28.184} / \textbf{0.764}  & \textbf{25.542} / \textbf{0.699} & \textbf{25.111} / \textbf{0.681}  \\ \hline
    % DBPN + new & 4 & \textbf{26.504} / \textbf{0.724}  & \textbf{25.977} / \textbf{0.707} & \textbf{24.654} / \textbf{0.674}  \\\hline

    \end{tabular}
\end{table*}

\begin{table*}
\footnotesize
\renewcommand{\arraystretch}{1.3}
%\caption{Super resolution comparison on Set14 with estimated kernel.} 
\caption{Blind super-resolution comparison on BSD100. Each cell displays PSNR [dB] (left) and SSIM (right).}
\label{table:super_res_est_bsd100}
\centering
    \begin{tabular}{ | l || l | l | l || l | l | l |}
    \hline
         & Scale & Gaussian std = $1.5/\sqrt{2}$ & Gaussian std = $2.5/\sqrt{2}$ & Scale & Gaussian std = $3.5/\sqrt{2}$ & Gaussian std = $4.5/\sqrt{2}$ \\ \hline
    KernelGAN & 2 & 26.615 / 0.773 & \textbf{28.244} / \textbf{0.780} & 4 & 24.363 / 0.647 & 25.238 / 0.652 \\ \hline        
    DBPN &  2 & \textbf{29.512} / \textbf{0.827} & 26.371 / 0.711 & 4	 & 25.268 / 0.662 & 24.357 / 0.628 \\ 
    DBPN + est. correction & 2 & 27.784 / 0.828 & 27.761 / 0.769 & 4 & \textbf{27.103}/ \textbf{0.722}  & \textbf{25.485} / \textbf{0.671}  \\ \hline
    % DBPN + new & 2 & \textbf{29.792} / \textbf{0.833} & \textbf{29.941} / \textbf{0.836} & 4 & \textbf{26.365}/ \textbf{0.696}  & \textbf{25.624} / \textbf{0.675}  \\ \hline
    \end{tabular}  
    
\end{table*}

\subsection{The blind setting}
\label{sec:exp_blind}

In this section, we repeat previous experiments, but without the assumption that the downscaling kernel $\mathbf{k}$ is known.
Therefore, to apply our approach we first estimate the correction filter using Algorithm~\ref{Alg:h_est}, and then use this estimation to restore the HR image by \eqref{Eq_our_SR}.
Note that Algorithm~\ref{Alg:h_est} exploits the pre-trained DNN to estimate the correction filter. Here we apply it only with DBPN, which has a more compact architecture than RCAN and proSR, and hence leads to faster inference. However, similar results can be obtained also for RCAN and proSR.
In this setting we compare our method to kernelGAN \cite{kernelGAN}, which estimates the downscaling kernel using adversarial training (in test-time) and then uses ZSSR to restore the HR image.

\begin{figure}
\captionsetup[subfigure]{labelformat=empty}
    \center
    \begin{subfigure}[b]{0.49\columnwidth}
        % \centering
        \includegraphics[trim={300pt 150pt 0pt 50pt}, clip, width=0.95\textwidth]{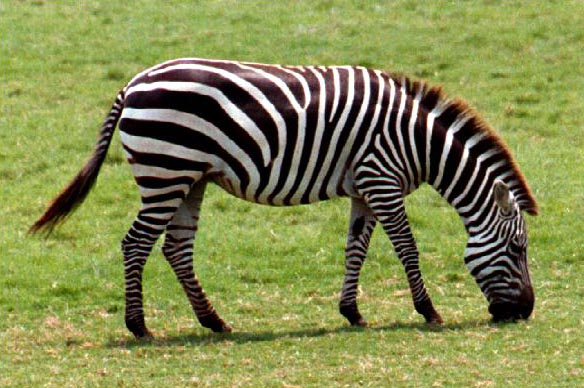}
        \caption{Original image (cropped)}
    \vspace{1mm}
    \end{subfigure}
    \begin{subfigure}[b]{0.49\columnwidth}
        % \centering
        \includegraphics[trim={300pt 150pt 0pt 50pt}, clip, width=0.95\textwidth]{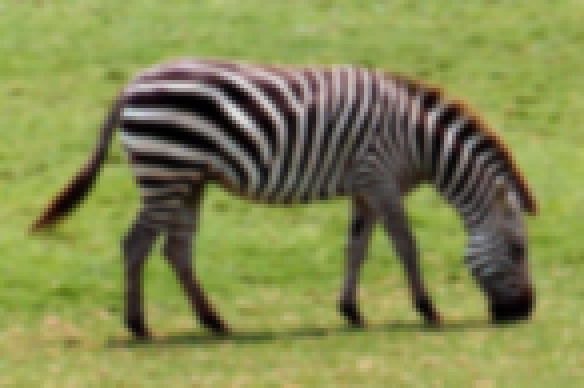}
        \caption{LR image}
    \vspace{1mm}
    \end{subfigure}\\
    \begin{subfigure}[b]{0.49\columnwidth}
        % \centering
        \includegraphics[trim={300pt 150pt 0pt 50pt}, clip, width=0.95\textwidth]{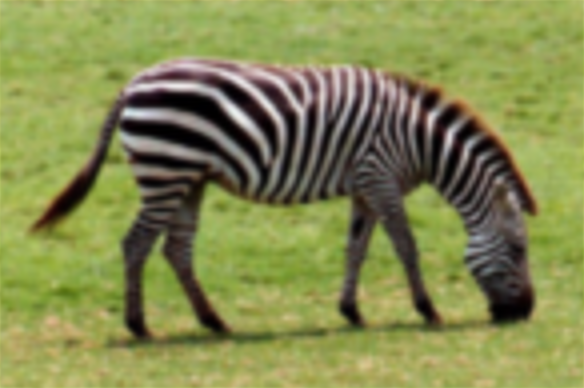}
        \caption{Bicubic upsampling}
    \vspace{1mm}
    \end{subfigure} 
    \begin{subfigure}[b]{0.49\columnwidth}
        % \centering
        \includegraphics[trim={300pt 150pt 0pt 50pt}, clip, width=0.95\textwidth]{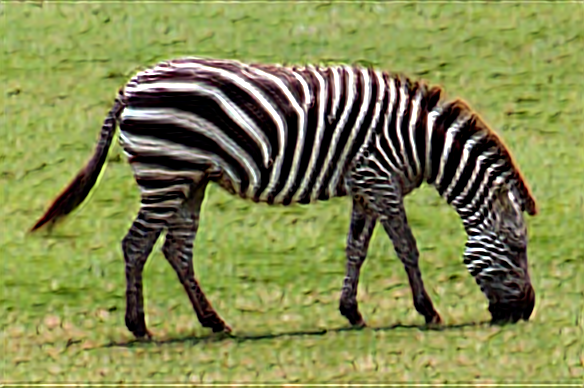}
        \caption{KernelGAN}
    \vspace{1mm}
    \end{subfigure}\\
    \begin{subfigure}[b]{0.49\columnwidth}
        % \centering
        \includegraphics[trim={300pt 150pt 0pt 50pt}, clip, width=0.95\textwidth]{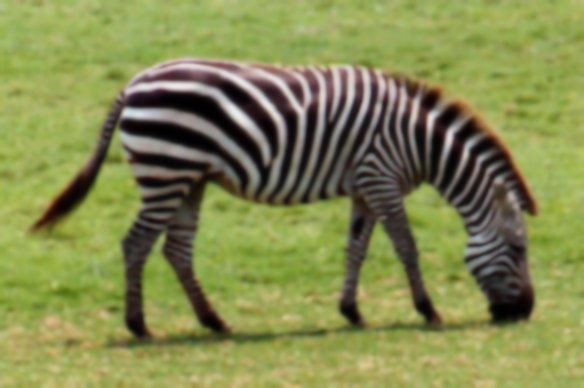}
        \caption{DBPN}
    \vspace{1mm}
    \end{subfigure} 
    \begin{subfigure}[b]{0.49\columnwidth}
        % \centering
        \includegraphics[trim={300pt 150pt 0pt 50pt}, clip, width=0.95\textwidth]{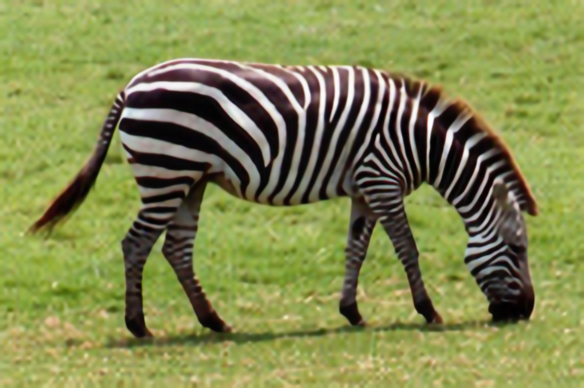}
        \caption{DBPN + est. correction}
    \vspace{1mm}
    \end{subfigure}
    \caption{Blind SR of {\em zebra} image from Set14, for scale factor 4 and Gaussian downscaling kernel with std $3.5/\sqrt{2}$.}
    \label{fig:blind_comparison}
\end{figure}

The results for Set14 and BSD100 are presented in Tables~\ref{table:super_res_est} and \ref{table:super_res_est_bsd100}, respectively, and visual examples are shown 
in Figure~\ref{fig:blind_comparison}. %{fig:comparison_blind_1.5_4} 
%and in the supplementary material. %\textcolor{red}{Figure XXX.}
More results and comparison on DIV2KRK are presented in the the supplementary material. 
It can be seen that the proposed filter correction approach 
improves the results of DBPN compared to its plain application. 
It also outperforms kernelGAN, despite being much simpler. % (despite being much simpler) in terms of reconstruction error.

%%%%%%%%% 
\section{Conclusion}

The SISR task has gained a lot from the developments in deep learning in the recent years. Yet, the leading DNN methods suffer from a huge performance drop when they are tested on images that do not fit the acquisition process assumption used in their training phase---which is, typically, that the downscaling kernel is bicubic.
In this work, we addressed this issue by a signal processing approach: computing a correction filter that modifies the low-resolution observations such that they mimic observations that are obtained with a bicubic kernel. (Notice that our focus in this work on the bicubic kernel is for the sake of simplicity of the presentation and due to its popularity. Yet, it is possible to use our developed tools also for other reconstruction kernels).
The modified LR is then fed into existing state-of-the-art DNNs that are trained only under the assumption of bicubic kernel. 
Various experiments have shown that the proposed approach significantly improves the performance of the pre-trained DNNs and outperforms other (much more sophisticated) methods that are specifically designed to be robust to different kernels.

%{\footnotesize 
{\bf Acknowledgment.} 
The work is supported by the NSF-BSF (No. 2017729) and ERC-StG (No. 757497) grants.
%\par}

{\small
\bibliographystyle{ieee_fullname}
\bibliography{egbib_tom}
}

\end{document}